\PassOptionsToPackage{table}{xcolor}
\documentclass[]{TJURLLAB}

\usepackage{algorithm}
\usepackage{algpseudocode}
\usepackage{wrapfig}
\usepackage{xspace}
\usepackage{tabularx}
\usepackage{array}
\usepackage{nicefrac}
\usepackage{fancyhdr}
\usepackage{fontawesome5}
\usetikzlibrary{arrows.meta}

\definecolor{appendixpurple}{RGB}{116,54,176}
\definecolor{appendixlightpurple}{RGB}{248,244,255}

\newcommand{\appentry}[3]{%
  \hyperref[#3]{\textcolor{appendixpurple}{\Large\textbf{#1}}} &
  \hyperref[#3]{\textcolor{appendixpurple}{\textbf{#2}}} &
  \hyperref[#3]{\textcolor{appendixpurple}{\textbf{\pageref*{#3}}}} \\
}

\newcommand{\appsubentry}[3]{%
  &
  \hspace{1.4em}\hyperref[#3]{\textcolor{appendixpurple}{#1\quad #2}} &
  \hyperref[#3]{\textcolor{appendixpurple}{\pageref*{#3}}} \\
}

\definecolor{highlightyellow}{RGB}{255,246,205}
\definecolor{highlightframe}{RGB}{232,190,70}

\newenvironment{defbox}{%
  \begin{tcolorbox}[enhanced, breakable, colback=accentcolor!10!white, colframe=accentcolor,
    boxrule=0.8pt, arc=6pt, left=1.2em, right=1.2em, top=0.7em, bottom=0.7em]
}{%
  \end{tcolorbox}
}

\newcommand{\circlednum}[1]{%
  \tikz[baseline={(char.base)}]{%
    \node[shape=circle, draw=accentcolor, fill=accentcolor, text=white,
          font=\footnotesize\bfseries, inner sep=1pt] (char) {#1};}%
}

\makeatletter
\patchcmd{\@thm}
  {\thm@notefont{\fontseries\mddefault\upshape}}
  {\thm@notefont{\bfseries}}
  {\typeout{>>> thm note font patched to bold}}
  {\typeout{!!! thm note font patch FAILED}}
\makeatother

\fancypagestyle{plain}{\fancyhf{}\fancyfoot[C]{\thepage}}

\title{Towards a Formal Definition of Agent Memory: Basis, Span, Optimality, and the Sequential Memory Problem}

\author{Hongyao Tang}
\affiliation{Tianjin University\\ \hspace*{\dimexpr 2.5pt + 0.5em\relax}\href{mailto:tanghongyao@tju.edu.cn}{\textcolor{accentcolor}{\texttt{tanghongyao@tju.edu.cn}}}}

\abstract{
\subsubsection*{Abstract}
Despite the wide deployment of memory in large-model agents, there is no unified formal account of what a memory is or when it is optimal. This paper takes a first step toward this account. The central idea is that memory is a basis, knowledge is its span, and answerability is a coverage problem: an agent stores events extracted from a material; a generation operator turns any event set into the knowledge it entails; and a query is answerable exactly when some single item in the span covers it. The optimal memory is then the capacity-constrained maximizer of expected coverage, and its value traces a utility--capacity frontier, the common yardstick on which memory systems can be compared. Next, we consider noise in the memory and discuss coverage versus precision under it: a memory may store false claims, so the write policy must infer the truth of what it stores. Drawing an analogy with biological memory, which is formed continuously through ongoing experience, we formalize the continual agent-memory problem in a sequential MDP that covers multiple levels, where memory is the state, writing is the action, and the utility settled at query time is the delayed reward that drives learning. To make the framework concrete, we instantiate it on Homer's \emph{Odyssey}, turning the frontier, the compression zone, and the divergence of coverage from precision into concrete numbers. Finally, we position existing systems within the framework, making ``how good is a memory'' measurable and recasting the open problems of constructing and learning agent memory as concrete research questions.
}

\begin{document}
\thispagestyle{plain}
\maketitle

\vspace{0.8em}
\begin{flushright}
\emph{``Tell me, Muse, of the man of many ways, who was driven far journeys \dots''}\\[0.2em]
\emph{(the Muse is the daughter of Mnemosyne, the goddess of memory)}\\[0.2em]
--- Homer, \emph{The Odyssey}~1.1, trans.~R.~Lattimore (1965)
\end{flushright}
\vspace{1.2em}

\section{Introduction}
\label{sec:intro}

Modern LLM agents accumulate interaction histories (dialogues, documents, observation streams) that quickly exceed the context window \citep{longbench}, and increasingly manage them with explicit memory modules that store, retrieve, and update over time \citep{memgpt,generative_agents}. Memory has become central to agent design, and a growing family of systems, from hierarchical page stores \citep{memgpt} and knowledge graphs \citep{hipporag} to reflection-based reasoning \citep{reflexion} and reinforcement-learned writing \citep{memagent}, attests to its practical importance.

The literature uses the word for conversation summaries, knowledge graphs, reflection texts, and parameter updates alike; ``memory quality'' is measured differently from benchmark to benchmark; and optimality is rarely even posed as a problem \citep{memorysurvey}. The brain sciences, by contrast, have long studied memory as a structured object: episodic and semantic memory are distinguished \citep{tulving1972}, and their interaction is characterized through complementary learning systems \citep{kumaran2016}. For large model agents, memory remains an undefined word rather than such an object. The result is a fragmented space of methods that cannot be compared on a common scale, and a field that cannot yet state its most basic questions.

\begin{tcolorbox}[enhanced, breakable, colback=accentcolor!10!white, colframe=accentcolor,
  boxrule=0.8pt, arc=6pt, left=1.2em, right=1.2em, top=0.7em, bottom=0.7em]
\begin{itemize}[leftmargin=2.4em, itemsep=0.35em]
  \item[\circlednum{1}] \textbf{What is a memory for an agent, and how should it be formally defined?}
  \item[\circlednum{2}] \textbf{What makes a memory good, and what is an optimal memory?}
  \item[\circlednum{3}] \textbf{How should a memory be written, and how should its problem settings be defined?}
\end{itemize}
\end{tcolorbox}

In this paper, we make our preliminary attempt to answer the three questions above by developing a single formal framework for agent memory. The framework supplies the three things the field lacks, one for each question: a formal definition of what a memory is, a measure of when a memory is optimal, and a formulation of how a memory is written.

We begin with the definition. It rests on a two-layer separation: an agent stores \emph{events}, atomic statements about a material, but events are not what answer queries; what answers a query is the \emph{knowledge} that a set of events entails. On this we build the paper's central idea, \textbf{memory is a basis, knowledge is its span, and answerability is a coverage problem}: a memory is a subset of the events extracted from a material, a generation operator $\Phi$ maps any event set to the knowledge it entails, and the span of a memory is what the agent can actually draw on. Under a single-item support principle (every query is answered by a single knowledge item), a query is answerable exactly when some spanned item covers it, so memory construction becomes a coverage problem: with at most $S$ events, cover as much of the query distribution as possible.

With the definition in hand, what makes a memory good becomes a well-posed optimization. The optimal memory is the capacity-constrained maximizer of expected coverage, and its value traces the \textbf{utility--capacity frontier}, the best answerability rate achievable with $S$ events. The frontier saturates at the full-context baseline and delimits the \emph{compression zone} in which memory matters; it maps every memory system to a point in the size--utility plane, where the gap below the frontier is what a write policy loses, and it is the common yardstick on which memory systems can be compared.

The hardest of the three questions is how a memory gets written, and two features of the real setting make it hard. Extraction is noisy, so we separate \textbf{coverage from precision} and measure how much of a memory's reported quality is bought with false claims, the \emph{water-inflation degree}. The query distribution is unknown, so writing must be learned from queries disclosed one at a time; we organize the settings of memorization into a progressive taxonomy and unify single- and multi-material writing in a \textbf{sequential MDP}, exposing delayed reward, credit assignment, and trust estimation as the core learning problem.

To make our formal framework concrete, we present an illustrative example based on Homer's \emph{Odyssey}, computing the framework's objects in full so that the frontier, the compression zone, and the divergence of coverage from precision become concrete numbers. Further, we map representative memory systems onto the framework, showing where existing methods sit and how they become comparable, and we close by distilling the problems left open into a research agenda and stating the framework's limitations.

We make the following contributions, the first three answering the three questions above.

\begin{itemize}
  \item \textbf{A formal definition of agent memory.} Memory is an event subset of a material, and its span is the knowledge the events generate; explicit assumptions (self-containment, monotonicity) together with single-item support make the object well-defined and push composition into the generation layer.
  \item \textbf{An optimality theory and a common yardstick.} The optimal memory is the capacity-constrained maximizer of a coverage utility, and its value traces the utility--capacity frontier, delimiting the compression zone in which memory matters. The underlying optimization reduces to weighted maximum coverage with a greedy $(1 - \tfrac{1}{e})$-approximation when the operator is decomposable, and to monotone set-function maximization in general, where cross-event synergy defeats such guarantees.
  \item \textbf{Memorization under noise and over time.} We separate coverage from precision under noisy extraction, organize the settings of memorization into a progressive taxonomy, and unify single- and multi-material writing in a sequential MDP, framing writing as a learning problem of delayed reward and credit assignment.
  \item \textbf{A positioning of existing systems and a research agenda.} We map representative memory systems onto the framework, showing that the field's methods differ mainly in their choices of generation, writing, reading, and reasoning components, and we turn the problems this leaves open into a concrete research agenda.
\end{itemize}

The rest of the paper is organized as follows. Section~\ref{sec:prelim} formalizes events, knowledge, the generation operator, and coverage utility. Section~\ref{sec:optimal} defines optimal memory and the utility--capacity frontier. Section~\ref{sec:noisy} extends the definition to noisy extraction. Section~\ref{sec:settings} organizes problem settings into a progressive taxonomy and unifies them in a sequential MDP. Section~\ref{sec:example} gives an illustrative example, Section~\ref{sec:related} reviews related work through the framework, and Section~\ref{sec:discussion} discusses a research agenda and the framework's limitations.

\section{Defining Agent Memory: Basis and Span}
\label{sec:prelim}

The framework rests on three ingredients: what memory stores, what it entails, and how to measure whether it helps. This section fixes each in turn. Section~\ref{sec:events} separates the stored surface of a material from the knowledge it entails; Section~\ref{sec:coverage} turns knowledge into answerability and then into a utility; Section~\ref{sec:phi} states the assumptions on the generation operator that give the theory its structure.

\subsection{Events, Knowledge, and Memory}
\label{sec:events}

Memory operates on raw material: documents, conversations, observation streams, and multimodal sources such as image-text articles or videos. To make memory precise we keep two layers apart. The surface layer is what an agent literally stores, \emph{events} (atomic statements about the material); the semantic layer is what those events entail, \emph{knowledge}. We define the two in turn.

\begin{definition}[Events and materials]\label{def:events}
Let $E$ be a space of \newterm{events}. We use \emph{event} broadly: an event $e \in E$ is an atomic statement about a material, whether a happening (a fact), a preference, a rule, or a reflection, typically as simple as a subject--predicate--object tuple. A \newterm{material} $D$ is a long source of information (a text, a conversation, an observation stream, an image-text article, a video); we write $E_D \subseteq E$ for the set of events contained in $D$.
\end{definition}

Events are the unit of storage, but events are not what answer queries. A raw event (``the supplier raised prices in March'') is too low-level for most questions; what answers a query is the knowledge that a set of events supports (``why did costs rise?''). We therefore introduce a generation operator that maps any set of events to the knowledge it entails.

\begin{definition}[Knowledge and the generation operator]\label{def:generator}
A \newterm{knowledge item} $n \in N$ is an atomic, self-contained unit of information. The \newterm{generation operator} $\Phi: 2^{E} \to 2^{N}$ maps a set of events $A \subseteq E$ to the set of knowledge they generate:
\begin{equation}
  n \in \Phi(A) \subseteq N \quad\Longleftrightarrow\quad n \text{ is generated by } A .
\end{equation}
A single event is a degenerate knowledge item, i.e.\ $E \subset N$. For a material $D$ we write $N_D = \Phi(E_D)$ for the knowledge contained in $D$.
\end{definition}

Events and the generation operator give us the raw ingredients, but the object this paper studies has not yet been named.

\begin{defbox}
\begin{definition}[Memory]\label{def:memory}
A \newterm{memory} of a material $D$ is a subset of its events,
\begin{equation}
  M_D \subseteq E_D .
\end{equation}
In the basis-and-span view, the stored events form a \newterm{basis}, and the knowledge they generate,
\begin{equation}
  R_{M_D} = \Phi(M_D) ,
\end{equation}
is the \newterm{span} of the memory. The whole material is the maximal memory $E_D$, whose span is the full knowledge $N_D$.
\end{definition}
\end{defbox}

Definition~\ref{def:memory} is the object this paper is about: a memory is a basis of events, and its span is what the agent can actually draw on. The material's full knowledge $N_D = \Phi(E_D)$ is the span of the maximal memory, and any smaller memory forgoes part of it. The gap between $R_{M_D}$ and $N_D$ is the knowledge a memory forgoes, and its utility cost is the query mass the span leaves uncovered; keeping that uncovered mass small is the subject of the rest of the paper. Figure~\ref{fig:framework} summarizes the two-layer picture.

\begin{figure}[t]
\centering
\begin{tikzpicture}[>={Stealth[length=3mm, width=2.4mm, inset=1mm]}, scale=1.1]
  \node[gray, font=\scriptsize, anchor=center] at (-6.3,-4.45) {Homer's \textit{Odyssey}};
  \node[black, font=\small\bfseries, anchor=center] at (-6.3,-2) {Event};
  \node[gray, font=\scriptsize, anchor=center] at (-6.3,-2.45) {``Penelope waited twenty years''};
  \node[black, font=\small\bfseries, anchor=center] at (-6.3,1.0) {Knowledge};
  \node[gray, font=\scriptsize, anchor=center] at (-6.3,0.55) {``away from home twenty years''};
  \node[black, font=\small\bfseries, anchor=center] at (-6.3,3.0) {Query};
  \node[gray, font=\scriptsize, anchor=center] at (-6.3,2.55) {``How many years away from home?''};
  \node[font=\scriptsize, anchor=center] at (-6.3,2.1) {%
    \textcolor{accentcolor}{\textbf{answer: ``twenty years''}}};
  \begin{scope}[xshift=-0.5cm]
  \node[accentcolor, font=\small\bfseries, anchor=center] at (5.8,3.0) {Coverage};
  \node[accentcolor, font=\small\bfseries, anchor=center] at (5.8,1.0) {Span};
  \node[accentcolor, font=\small\bfseries, anchor=center] at (5.8,-2) {Basis};
  \draw[gray!25, dashed] (-4.1,2.25) -- (6,2.25);
  \draw[gray!25, dashed] (-4.1,-1.15) -- (6,-1.15);
  \begin{scope}[xshift=0.8cm]
  \draw[rounded corners=2pt] (-3.5,-5) rectangle (3.2,-4);
  \node[font=\footnotesize] at (-0.15,-4.5) {Material $D$};
  \draw[->, line width=0.6pt] (-0.15,-4) -- (-0.15,-2.7);
  \node[gray, font=\footnotesize, right] at (-1.2,-3.8) {Extract $E_D$};
  \draw[gray, rounded corners=3pt, fill=gray!6] (-3.65,-2.7) rectangle (3.35,-1.55);
  \draw[dashed, accentcolor, rounded corners=3pt, fill=accentcolor!4]
    (-3.25,-2.65) rectangle (-0.25,-1.65);
  \foreach \x in {-2.95,-2.15,-1.35,-0.55}
    \draw[accentcolor, fill=accentcolor] (\x,-2) circle (0.15);
  \foreach \x in {0.25,1.05,1.85,2.65}
    \draw[gray, fill=white] (\x,-2) circle (0.15);
  \foreach \x/\lab in {-2.95/e_1, -2.15/e_2, -1.35/e_3, -0.55/e_4, 0.25/e_5, 1.05/e_6, 1.85/e_7, 2.65/e_8}
    \node[below, font=\scriptsize] at (\x,-2.32) {$\lab$};
  \node[font=\footnotesize] at (-1.9,-3) {Memory $M_D \subseteq E_D$};
  \node[font=\footnotesize] at (-1.9,-3.45) {$|M_D| \le S$};
  \node[gray, font=\footnotesize] at (1.8,-3) {the rest of $E_D$ not in memory};
  \end{scope}
  \node[gray, font=\footnotesize] at (0.6,-1.2) {Span by $\Phi$};
  \draw[->, line width=1pt, accentcolor] (-0.75,-1.55) -- (-0.75,-0.68);
  \draw[gray, rounded corners=3pt, fill=gray!6] (-3.35,-0.68) rectangle (4.65,2.1);
  \draw[dashed, accentcolor, rounded corners=3pt, fill=accentcolor!4]
    (-3.1,-0.58) rectangle (0.4,1.95);
  \foreach \x/\lab in {-2.65/n_a, -1.85/n_b, -1.05/n_c, -0.25/n_d} {
    \draw[accentcolor, fill=accentcolor] (\x,1.22) rectangle (\x+0.26,1.48);
    \node[below, font=\scriptsize] at (\x+0.13,1.08) {$\lab$};
  }
  \foreach \x/\lab in {-2.65/n_{ab}, -1.85/n_{ac}, -1.05/n_{bc}, -0.25/n_{abc}} {
    \draw[accentcolor, fill=accentcolor] (\x,0.42) rectangle (\x+0.26,0.68);
    \node[below, font=\scriptsize] at (\x+0.13,0.30) {$\lab$};
  }
  \foreach \x/\lab in {0.55/n_e, 1.35/n_f, 2.15/n_g, 2.95/n_h, 3.75/n_{ef}} {
    \draw[gray, fill=white] (\x,1.22) rectangle (\x+0.26,1.48);
    \node[below, gray, font=\scriptsize] at (\x+0.13,1.08) {$\lab$};
  }
  \foreach \x/\lab in {0.55/n_{eg}, 1.35/n_{eh}, 2.15/n_{fg}, 2.95/n_{fh}, 3.75/n_{gh}} {
    \draw[gray, fill=white] (\x,0.42) rectangle (\x+0.26,0.68);
    \node[below, gray, font=\scriptsize] at (\x+0.13,0.30) {$\lab$};
  }
  \node[font=\footnotesize] at (-1.35,-0.35) {$R_{M_D} = \Phi(M_D)$};
  \node[gray, font=\footnotesize] at (2.3,-0.35) {$N_D = \Phi(E_D)$};
  \draw[->, gray, dashed, line width=0.6pt] (1.8,-1.55) -- (1.8,-0.68);
  \node[font=\footnotesize] at (-2.0,3.35) {Queries $q \sim p_D$};
  \foreach \x/\lab in {-2.52/q_1, -1.72/q_2, -0.92/q_3, -0.12/q_4} {
    \draw[dashed, accentcolor] (\x,1.48) -- (\x,3.0);
    \draw[accentcolor, fill=accentcolor] (\x,3.0) -- (\x-0.12,2.85) -- (\x+0.12,2.85) -- cycle;
    \node[below, font=\footnotesize] at (\x,2.66) {$\lab$};
  }
  \foreach \x/\lab in {0.68/q_6, 2.28/q_5, 3.08/q_7} {
    \draw[dashed, gray] (\x,1.48) -- (\x,3.0);
    \draw[gray, fill=white] (\x,3.0) -- (\x-0.12,2.85) -- (\x+0.12,2.85) -- cycle;
    \node[below, gray, font=\footnotesize] at (\x,2.66) {$\lab$};
  }
  \node[below, gray, font=\footnotesize] at (4.3,2.66) {$\cdots$};
  \node[above, gray, font=\scriptsize] at (2.35,3.15) {answerable only from the full material};
  \end{scope}
\end{tikzpicture}
\caption{Memory is a basis; knowledge is its span. Extracting the events $E_D$ from a material $D$ and storing at most $S$ of them forms a memory $M_D \subseteq E_D$, the dashed box inside the full event set; monotonicity of $\Phi$ then lifts the memory to a span $R_{M_D} = \Phi(M_D)$, drawn as the purple box nested inside the full knowledge $N_D = \Phi(E_D)$ (gray box): $R_{M_D} \subseteq N_D$. Under single-item support a query is answerable exactly when some spanned item covers it (dashed edges). Queries are drawn from the material's query distribution $p_D$: the purple queries are covered by the memory's span, while the gray queries $q_5, q_6, q_7, \ldots$ are answerable only from the full material, the cost of compression. The item $n_{ab}$ is spanned by $e_1$ and $e_2$ jointly: composition happens inside $\Phi$. The left column instantiates the chain concretely on the \emph{Odyssey}.}
\label{fig:framework}
\end{figure}

\subsection{Query Answering with Memory}
\label{sec:coverage}

A memory is useful only insofar as it makes queries answerable. We define answerability at the level of a single knowledge item, then lift it to memories through an assumption about how items combine.

\begin{definition}[Answerability]\label{def:ans}
Let $\mathcal{Q}$ be a space of queries. The atomic relation $\mathrm{ans}(n,q) \in \{0,1\}$ indicates whether the single knowledge item $n$ suffices to answer the query $q$. The \newterm{answerable set} of $n$ is
\begin{equation}
  \mathcal{Q}(n) = \big\{\, q \in \mathcal{Q} : \mathrm{ans}(n,q) = 1 \,\big\}.
\end{equation}
\end{definition}

Each knowledge item thus covers a set of queries, $\mathcal{Q}(n)$: the questions it alone suffices to answer. Whether a whole memory answers a query then depends on how these item-level sets combine, and here we make the central assumption of the framework.

\begin{assumption}[Single-item support]\label{asmp:single}
Every query is answered by a single knowledge item: composition and reasoning do not happen at answer time but are pushed into the generation operator $\Phi$. Equivalently, a knowledge set $K$ answers $q$ if and only if $q \in \bigcup_{n \in K} \mathcal{Q}(n)$.
\end{assumption}

This assumption is a design choice, and it is worth saying what it buys and what it costs. It rules out joint reasoning at answer time: the agent does not combine items on the fly to answer a query, and every chain of reasoning must already be realized inside $\Phi$ and materialized as a single knowledge item. The payoff is that answerability becomes a pure coverage question, which makes optimal memory construction a well-defined optimization problem (Section~\ref{sec:optimal}).

\begin{definition}[Coverage utility]\label{def:utility}
The \newterm{coverage utility} of a memory $M_D$ on a query $q$ is
\begin{equation}
  u(q, M_D) = \mathbf{1}\Big[\, q \in \bigcup_{n \in \Phi(M_D)} \mathcal{Q}(n) \,\Big] \in \{0,1\},
\end{equation}
whether the memory's spanned knowledge contains a single item that answers $q$.
\end{definition}

The utility has an explicit coverage structure: the answerable set of a memory is the union $\bigcup_{n \in \Phi(M_D)} \mathcal{Q}(n)$, so the expected utility $\mathbb{E}_{q \sim p_D}\big[u(q,M_D)\big]$ is the probability that a query drawn from the material's query distribution is covered by the memory's span. Optimal memory construction is therefore a coverage problem: use at most $S$ events to cover as much query mass as possible (Section~\ref{sec:optimal}).

Coverage is not correctness. The utility above records only whether the necessary information is present in memory, not whether the agent produces the right answer. We keep these two concerns apart.

\begin{definition}[End-to-end utility]\label{def:e2e}
The coverage utility measures only whether necessary information is available; it does not measure whether the agent answers correctly. We separately define the \newterm{end-to-end utility}
\begin{equation}
  u^{\mathrm{e2e}}\big(q, M_D, \pi_{\mathrm{reasoner}}\big)
  = \mathbf{1}\big[\, \mathrm{Correct}\big(\pi_{\mathrm{reasoner}}(q, M_D),\, q\big) \,\big],
\end{equation}
where $\pi_{\mathrm{reasoner}}$ is the reasoning policy and $\mathrm{Correct}(\cdot, q)$ judges whether an output correctly answers $q$. The reasoner is understood to answer from the retrieved subset of the memory's span (Section~\ref{sec:related}). Throughout we take the \emph{proxy stance} that, under noise-free memory, sufficient information approximately equals answerability, and we optimize $u$; whether the reasoner makes good use of available information is treated as a separate layer.
\end{definition}

So far the utility is defined for a single query. To optimize a memory we need to weigh queries and materials, which brings us to distributions.

\begin{definition}[Query and material distributions]\label{def:dist}
For a material $D$, $p_D(q)$ denotes the distribution over queries that an agent is expected to face for $D$ (in long interactive settings, the questions a user is likely to ask). $p(D)$ denotes a distribution over materials. Both are typically unknown and must be estimated from sequentially disclosed queries (Section~\ref{sec:settings}).
\end{definition}

\subsection{Assumptions on the Generation Operator}
\label{sec:phi}

The theory's structure rests on two defaults about $\Phi$, which in practice is an abstract procedure implemented by an LLM performing extraction, induction, and summarization. We assume \newterm{self-containment} (Assumption~\ref{asmp:selfcontain}): storing an event never throws it away. We assume \newterm{monotonicity} (Assumption~\ref{asmp:mono}): more memory is never worse, so coverage is non-decreasing in the stored set. Two deliberate omissions matter as much. We do \emph{not} assume \newterm{decomposability}: cross-event composition can generate knowledge that no single event yields, which is exactly how a few events span more than the sum of their parts, and what makes the optimality problem of Section~\ref{sec:optimal} harder in general. And real extraction meets these properties only approximately: an LLM is at best \emph{locally} monotone, so injecting a conflicting fact can overturn earlier conclusions. The full statements, with the rationale and the consequences of relaxing each assumption, are given in Appendix~\ref{app:phi}.

\section{Optimal Memory and the Utility--Capacity Frontier}
\label{sec:optimal}

This section gives the question ``what is a good memory'' a precise answer. Optimality is always relative to a material $D$ and a query distribution $p_D$ (or, when marginalized over a material distribution $p(D)$, relative to a family of materials, Section~\ref{sec:settings}). Two conventions keep the treatment honest. First, ``optimal memory'' is really a property of the \emph{writing} that produces it, so we study memories $M_D = \pi_{\mathrm{write}}(D)$ generated by a write policy. Second, we defer the representation of $M_D$ (graphs, hierarchical pages, parameter vectors): the account here is about what to store, not how to lay it out.

\subsection{Optimal Memory and Write Policy}
\label{sec:optdef}

A \newterm{write policy} turns a material into a memory, $\pi_{\mathrm{write}}(D) \subseteq E_D$, and the \newterm{optimal memory} is any maximizer of the expected coverage utility (Definition~\ref{def:utility}). By monotonicity (Section~\ref{sec:phi}) this is trivial without a budget: storing every event attains the maximum (Proposition~\ref{prop:monotone}), so the unconstrained statement and its tied maximizers are spelled out in Appendix~\ref{app:optimal}. Memory matters precisely because capacity is limited, so from here on we optimize under a bound.

\begin{defbox}
\begin{definition}[Optimal memory of constrained capacity]\label{def:optimals}
With a capacity bound $S$, the \newterm{capacity-constrained optimal memory} is
\begin{equation}
  M^*_D(S) = \arg\max_{\substack{M_D \subseteq E_D \\ |M_D| \le S}}\ \mathbb{E}_{q \sim p_D}\big[\,u(q, M_D)\,\big] ,
\end{equation}
and the value it attains defines the \newterm{utility--capacity frontier}
\begin{equation}
  U^*_D(S) = \max_{\substack{M_D \subseteq E_D \\ |M_D| \le S}}\ \mathbb{E}_{q \sim p_D}\big[\,u(q, M_D)\,\big] .
\end{equation}
The \newterm{capacity-constrained optimal write policy} is any $\pi_{\mathrm{write}}^*$ whose output attains this optimum; the frontier value $U^*_D(S)$ is exactly the utility such a policy attains.
\end{definition}
\end{defbox}

Because $u$ is a $\{0,1\}$ indicator, the expected utility is a probability, $\mathbb{E}_{q\sim p_D}\big[u(q,M_D)\big] = \Pr_{q \sim p_D}\big[u(q,M_D) = 1\big]$: the chance that a random query is answerable from the memory's span. The frontier $U^*_D(S)$ is thus the \newterm{query answerability rate} achievable with $S$ events, and it is the number on which different memory systems can be compared on a common scale: a system's memory is good insofar as it reaches a high point of the frontier at a small size.

\subsection{Properties of the Frontier}
\label{sec:frontier}

The frontier inherits its shape from the generation operator. Under monotonicity, it settles into a clean picture.

\begin{proposition}[Monotonicity of utility]\label{prop:monotone}
Under Assumption~\ref{asmp:mono}, for any $M_D \subseteq M'_D \subseteq E_D$ and any query $q$, $u(q,M_D) \le u(q,M'_D)$; hence the expected utility $M_D \mapsto \mathbb{E}_{q\sim p_D}\big[u(q,M_D)\big]$ is non-decreasing.
\end{proposition}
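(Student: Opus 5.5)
The plan is a direct unfolding of definitions. Assumption~\ref{asmp:mono} is stated fully only in Appendix~\ref{app:phi}. Section~\ref{sec:phi} paraphrases it as ``more memory is never worse, so coverage is non-decreasing in the stored set.'' I will use it in its natural set-theoretic form: $A \subseteq B \subseteq E$ implies $\Phi(A) \subseteq \Phi(B)$. This is also the form the caption of Figure~\ref{fig:framework} invokes to get $R_{M_D} \subseteq N_D$.

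\textbf{Step 1 (pointwise inequality).} Fix $M_D \subseteq M'_D \subseteq E_D$ and a query $q$. By monotonicity, $\Phi(M_D) \subseteq \Phi(M'_D)$. A union over a larger index set contains the union over a smaller one, so
\[
\bigcup_{n\in\Phi(M_D)}\mathcal{Q}(n)\ \subseteq\ \bigcup_{n\in\Phi(M'_D)}\mathcal{Q}(n).
\]
The indicator of a set is monotone under inclusion. By Definition~\ref{def:utility}, this gives $u(q,M_D)\le u(q,M'_D)$. Equivalently, if $u(q,M_D)=1$, then some $n\in\Phi(M_D)$ has $\mathrm{ans}(n,q)=1$. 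That same $n$ lies in $\Phi(M'_D)$, so $u(q,M'_D)=1$. If $u(q,M_D)=0$, the inequality holds trivially because $u\ge 0$.

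\textbf{Step 2 (expectation).} The inequality holds for every $q\in\mathcal{Q}$, and both sides are bounded $\{0,1\}$-valued functions. Taking expectations under $p_D$ preserves it by monotonicity of the integral, so $\mathbb{E}_{q\sim p_D}[u(q,M_D)]\le \mathbb{E}_{q\sim p_D}[u(q,M'_D)]$. Since $M_D\subseteq M'_D$ were arbitrary, the set function $M_D\mapsto \mathbb{E}_{q\sim p_D}[u(q,M_D)]$ is non-decreasing on $2^{E_D}$. Measurability is implicit in Definition~\ref{def:dist}: the expectation is already assumed well-defined, being the paper's objective.

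\textbf{Main obstacle.} The only real hinge is the exact form of Assumption~\ref{asmp:mono}. If the appendix states it as set-inclusion of spans, Step~1 is immediate. If instead it is phrased directly as non-decreasing coverage, the proposition is essentially a restatement, and Step~1 just quotes it. I would check which version the appendix states. If it is a weaker, ``local'' version (the caveat about LLMs being only locally monotone), I would restrict the claim to the pairs $M_D\subseteq M'_D$ where the assumption applies.
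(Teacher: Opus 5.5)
Your proof is correct and follows the paper's argument exactly: monotonicity of $\Phi$ gives $\Phi(M_D)\subseteq\Phi(M'_D)$, which gives containment of the unions of answerable sets, which gives the pointwise indicator inequality, and this carries over to expectations. Your closing hedge is unnecessary. Assumption~\ref{asmp:mono} is stated in the appendix in exactly the set-inclusion form $A\subseteq B \Rightarrow \Phi(A)\subseteq\Phi(B)$ that you used, and the ``locally monotone'' caveat is a separate remark about real LLMs (Remark~\ref{rem:llm}), not part of the assumption.
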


\begin{proof}
If $M_D \subseteq M'_D$, then $\Phi(M_D) \subseteq \Phi(M'_D)$ by monotonicity, so the union $\bigcup_{n \in \Phi(M_D)}\mathcal{Q}(n)$ is contained in $\bigcup_{n \in \Phi(M'_D)}\mathcal{Q}(n)$, and the indicator of Definition~\ref{def:utility} is non-decreasing.
\end{proof}

\begin{proposition}[Frontier saturation]\label{prop:frontier}
Under Assumption~\ref{asmp:mono}, the utility--capacity frontier is non-decreasing in $S$, and $U^*_D(S) = \mathbb{E}_{q\sim p_D}\big[u(q,E_D)\big]$ for every $S \ge |E_D|$: with enough capacity, the trivial memory that stores everything is optimal.
\end{proposition}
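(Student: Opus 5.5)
The plan is to prove the two claims of Proposition~\ref{prop:frontier} separately, relying on Proposition~\ref{prop:monotone} and on the structure of the feasible sets in Definition~\ref{def:optimals}. Write $F(M_D) = \mathbb{E}_{q\sim p_D}[u(q,M_D)]$ and $\mathcal{F}_S = \{M_D \subseteq E_D : |M_D| \le S\}$. When $E_D$ is finite, each $\mathcal{F}_S$ is a finite nonempty family; it always contains $\emptyset$. So the maximum defining $U^*_D(S)$ exists, and $U^*_D(S)=\max_{M\in\mathcal{F}_S}F(M)$ is well defined.

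\textbf{Monotonicity in $S$.} For $S \le S'$ we have $\mathcal{F}_S \subseteq \mathcal{F}_{S'}$. A maximum over a larger family is at least as large, so $U^*_D(S) \le U^*_D(S')$. This step does not even need Assumption~\ref{asmp:mono}. The assumption is needed only for the saturation claim.

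\textbf{Saturation.} Fix $S \ge |E_D|$. Then $E_D$ itself satisfies $|E_D| \le S$, so $E_D \in \mathcal{F}_S$ and $U^*_D(S) \ge F(E_D)$. Conversely, every feasible $M_D$ satisfies $M_D \subseteq E_D$. Proposition~\ref{prop:monotone} then gives $u(q,M_D)\le u(q,E_D)$ pointwise in $q$. Taking expectations under $p_D$ gives $F(M_D)\le F(E_D)$, and hence $U^*_D(S)\le F(E_D)$. The two inequalities together yield equality, and they show that the trivial memory $E_D$ attains the maximum. So the write policy that stores everything is a capacity-constrained optimal write policy.

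\textbf{Main obstacle.} There is no real obstacle; the only delicate point is existence of the maximum. If $E_D$ were infinite, the saturation range $S\ge|E_D|$ would need a cardinal interpretation. In that case I would read $\max$ as $\sup$. The sandwich argument above still works, because $E_D$ is feasible and attains the bound. So the supremum is attained and equals $F(E_D)$. I would add a remark to this effect, while treating the finite case as the primary setting.
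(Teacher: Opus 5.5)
Your proof is correct and follows the paper's argument: the feasible families are nested, which gives monotonicity in $S$, and for $S \ge |E_D|$ the full set $E_D$ is feasible and dominates every other memory by Proposition~\ref{prop:monotone}. Your side remarks are accurate additions that the paper leaves implicit: the first claim does not need Assumption~\ref{asmp:mono}, and you address existence of the maximum.
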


\begin{proof}
The feasible set $\{M_D \subseteq E_D : |M_D| \le S\}$ grows with $S$, so the maximum is non-decreasing. For $S \ge |E_D|$ the full set $E_D$ is feasible and, by Proposition~\ref{prop:monotone}, dominates every other memory.
\end{proof}

Proposition~\ref{prop:frontier} gives the definition its empirical content. It identifies a \newterm{full-context baseline}: the value $\mathbb{E}_{q\sim p_D}\big[u(q,E_D)\big]$ that an agent achieves by keeping the entire material in context. Because the frontier saturates at this value, memory genuinely matters only in the \newterm{compression zone}, the regime $S < |E_D|$ in which a few events must span knowledge as close as possible to the full-context level. The vertical gap between $U^*_D(S)$ and the baseline is exactly the cost of compression.

The frontier also gives every write policy a yardstick. A policy $\pi$ produces a memory of size $|\pi(D)|$ with expected utility $\mathbb{E}_{q\sim p_D}\big[u(q,\pi(D))\big]$, so it defines a point on or below the frontier; its vertical distance from $U^*_D(|\pi(D)|)$ is the \newterm{memory-efficiency loss} of the policy. The three gaps are distinct: the \newterm{span gap} $R_{M_D}$ vs $N_D$ says what a memory fails to span; the compression cost is the vertical gap between the frontier and the full-context baseline, paid by optimal compression itself; the memory-efficiency loss is a policy's vertical gap to the frontier at its own capacity. Section~\ref{sec:example} computes $U^*_D(S)$ on a hand-checked instance and reads the compression cost and the memory-efficiency loss off it.

\subsection{Solving Optimal Memory as Maximum Coverage}
\label{sec:optstructure}

The capacity-constrained problem is a set-function maximization, and its tractability hinges on how $\Phi$ composes events. To see why, view each event $e$ through the only thing that makes it useful: the set of queries its knowledge answers, which we denote
\begin{equation}
  S_e = \bigcup_{n \in \Phi(\{e\})} \mathcal{Q}(n) .
\end{equation}
Choosing a memory of size at most $S$ is then choosing at most $S$ of these query sets, and the value of the choice is the query mass they cover. Whether this is the whole story depends on whether the sets combine additively: when $\Phi$ decomposes, a memory's coverage is exactly the union of its events' sets; when events interact, the union is enriched by knowledge that no single event yields. The following proposition characterizes these two extremes.

\begin{defbox}
\setstretch{1.12}
\begin{proposition}[The complexity of optimal memory]\label{prop:coveragestructure}
Let $\Phi$ be monotone.
\begin{enumerate}[itemsep=2pt]
  \item[(a)] If $\Phi$ is decomposable, $\Phi(M_D) = \bigcup_{e \in M_D}\Phi(\{e\})$, then $U^*_D(S)$ equals the value of a weighted maximum-coverage instance over the per-event query sets $S_e$ defined above, with element weight $p_D(q)$. This optimization is NP-hard, and the greedy algorithm attains the classical $(1 - \tfrac{1}{e})$-approximation.
  \item[(b)] In general, the objective $M_D \mapsto \mathbb{E}_{q\sim p_D}\big[u(q,M_D)\big]$ is a monotone set function (Proposition~\ref{prop:monotone}) but need not be submodular (Remark~\ref{rem:synergy}): the reduction to maximum coverage fails, and no greedy-style guarantee applies.
\end{enumerate}
\end{proposition}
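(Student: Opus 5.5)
For part (a) the plan is a direct rewriting of the objective. Under decomposability, $\Phi(M_D) = \bigcup_{e\in M_D}\Phi(\{e\})$, so the answerable set of the memory is
\begin{equation}
  \bigcup_{n\in\Phi(M_D)}\mathcal{Q}(n) = \bigcup_{e\in M_D}\ \bigcup_{n\in\Phi(\{e\})}\mathcal{Q}(n) = \bigcup_{e\in M_D} S_e .
\end{equation}
Hence $\mathbb{E}_{q\sim p_D}[u(q,M_D)] = p_D\big(\bigcup_{e\in M_D}S_e\big)$. The frontier $U^*_D(S)$ is then literally the weighted maximum-coverage value: the ground elements are queries, the weights are $p_D(q)$, and the family of sets is $\{S_e\}_{e\in E_D}$ with cardinality budget $S$. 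For a continuous $p_D$ one reads ``weight'' as measure, and nothing changes.

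For NP-hardness I would reduce the other way, from unweighted maximum coverage (equivalently, Set Cover's decision version). Given sets $T_1,\dots,T_m$ over a finite universe $Q$, take one event $e_i$ per set and put a uniform $p_D$ on $Q$. Define a decomposable $\Phi$ by $\Phi(\{e_i\})=\{e_i\}$, and set $\mathrm{ans}(e_i,q)=1$ iff $q\in T_i$. This $\Phi$ is monotone, and we get $S_{e_i}=T_i$. Deciding whether $U^*_D(k)=1$ then decides whether $k$ sets cover $Q$.

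For the greedy guarantee I would note that $M\mapsto p_D(\bigcup_{e\in M}S_e)$ is monotone, submodular, and zero at the empty set. Submodularity holds because the marginal gain of $e$ is $p_D(S_e\setminus\bigcup_{M}S_{e'})$, which is non-increasing in $M$. The Nemhauser--Wolsey--Fisher theorem then gives the $(1-\tfrac1e)$ bound, and I would invoke it rather than reprove it.

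For part (b), monotonicity is exactly Proposition~\ref{prop:monotone}. The remaining claim is that submodularity can fail. My plan is to exhibit a minimal synergy instance, presumably the one in Remark~\ref{rem:synergy}: two events $e_1,e_2$ with $\Phi(\{e_i\})=\{e_i\}$ and $\Phi(\{e_1,e_2\})=\{e_1,e_2,n_{12}\}$. Each $e_i$ answers no query, while $n_{12}$ answers a query $q$ of mass $1$. This $\Phi$ is monotone. The marginal gain of $e_2$ is $0$ at $\emptyset$ and $1$ at $\{e_1\}$, which violates diminishing returns.

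The same instance shows greedy can fail badly. Add a decoy event $e_3$ that covers a separate query of mass $\varepsilon$, let the remaining mass sit on $q$, and set $S=2$. Greedy first picks $e_3$, since it is the only event with positive singleton gain, and then gains nothing further. It ends with value about $\varepsilon$ against an optimum near $1$, so its ratio is unbounded. The main subtlety is stating ``no greedy-style guarantee'' precisely; I would phrase it as this explicit counterexample to any constant-factor guarantee for greedy, rather than as a hardness-of-approximation claim.
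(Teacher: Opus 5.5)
Your proof is correct and follows the paper's skeleton: rewrite the decomposable objective as $p_D\bigl(\bigcup_{e\in M_D}S_e\bigr)$, cite the classical results for weighted maximum coverage, and refute submodularity with a synergy instance. It differs from the paper in three ways.

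First, the paper says only that a uniform $p_D$ ``recovers'' unweighted maximum coverage. You give the actual reduction ($\Phi(A)=A$, $\mathrm{ans}(e_i,q)=1$ iff $q\in T_i$, so $S_{e_i}=T_i$). This checks the point the paper leaves implicit: every set system arises as $\{S_e\}$ for some monotone decomposable operator.

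Second, your two-event instance is smaller than the three-event one of Remark~\ref{rem:synergy}, but it plays the same role.

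Third, and most substantively, your decoy construction proves the last clause of (b) instead of inferring it. The paper passes directly from ``not submodular'' to ``no greedy-style guarantee applies.'' But non-submodularity only removes the hypothesis of the Nemhauser--Wolsey--Fisher theorem. On the paper's own instance greedy is in fact optimal for every $S$, since all singletons are worth $1/4$ and all pairs $1/2$. Your instance, where greedy gets $\varepsilon$ against an optimum of $1-\varepsilon$ at $S=2$, is what actually shows that greedy admits no constant-factor guarantee.

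To make it airtight, write the operator out: $\Phi(A)=A\cup\{n_{12}\}$ when $\{e_1,e_2\}\subseteq A$, and $\Phi(A)=A$ otherwise. This pins down $\Phi(\{e_3,e_i\})$, so greedy's zero second-step gain is justified, and monotonicity is immediate.
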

\end{defbox}

\begin{proof}[Proof sketch]
Under decomposability the expected utility equals the total query mass covered by the chosen events, the value of a weighted maximum-coverage instance; the classical NP-hardness and $(1 - \tfrac{1}{e})$ greedy guarantee then apply. Part~(b) follows from Proposition~\ref{prop:monotone} together with Remark~\ref{rem:synergy}; the complete argument is given in Appendix~\ref{app:optimal}.
\end{proof}

Taken together, the two cases give the section's conclusion in plain terms. With a decomposable operator, optimal memory is a coverage purchase: buy the $S$ events that cover the most query mass, and greedy buying stays within the factor $(1 - 1/e)$ of the best possible. With interacting events, the purchase intuition survives but the guarantee does not: once marginal returns can grow, no off-the-shelf approximate algorithm applies. The tension is fundamental rather than incidental: the cross-event synergy behind a span's power is exactly what defeats the greedy argument. This is also the first place the framework points beyond one-shot optimization: when the query distribution is itself unknown (Section~\ref{sec:settings}), the objective $\mathbb{E}_{q \sim p_D}$ cannot even be evaluated, and writing must be learned sequentially rather than solved in one shot.

\section{Noisy Memory: Coverage versus Precision}
\label{sec:noisy}

So far memory could contain only true events. Real extraction is noisy: the LLM that implements $\Phi$ and the extraction step may hallucinate, quote out of context, or inject claims that contradict the material. This section drops the assumption that memory is error-free, and separates two quantities that noise drives apart: how much a memory \emph{covers} and how much of that coverage is \emph{correct}.

\subsection{Claims and the Precision Gap}
\label{sec:claims}

\begin{definition}[Claims]\label{def:claims}
A \newterm{claim} is a candidate event: an atomic statement produced by the extraction step, which may or may not be true of the material. We write $\hat{E}_D \supseteq E_D$ for the set of claims extracted from a material $D$, so that a memory is now any $M_D \subseteq \hat{E}_D$. The generation operator is defined on atomic statements, so it applies to claims as well as events: for any claim set, $\Phi(M_D)$ is the knowledge the stored statements would support, whether they are true or not. Each claim $e$ carries a truth value $\tau(e) \in \{0,1\}$, whether $e$ is true of the material (or the world); the true claims are exactly the events of Definition~\ref{def:events}, $E_D = \{e \in \hat{E}_D : \tau(e) = 1\}$. The truth value may be relaxed to a confidence $\tau(e) \in [0,1]$.
\end{definition}

A memory that may contain false claims no longer reliably reflects what the agent knows, so the span splits in two:
\begin{equation}
  R_{M_D} = \Phi(M_D), \qquad R_{M_D}^{\mathrm{good}} = \Phi\big(M_D \cap E_D\big) .
\end{equation}
The first is what the memory \emph{believes} it knows; the second, the \newterm{good span}, is what it \emph{truly} knows, since it is generated only by true claims. This presumes that $\Phi$ is reliable on true inputs; failures of that default belong to the end-to-end layer (Definition~\ref{def:e2e}).

The coverage utility of Definition~\ref{def:utility} ignores truth; where the contrast with a truth-aware variant matters, we denote the coverage utility by $u^{\mathrm{cov}}$.

\begin{definition}[Precision utility]\label{def:prec}
The \newterm{precision utility} is
\begin{equation}
  u^{\mathrm{prec}}(q, M_D) = \mathbf{1}\Big[\, q \in \bigcup_{n \in \Phi(M_D \cap E_D)} \mathcal{Q}(n) \,\Big] ,
\end{equation}
whether the memory's good span contains a single knowledge item that answers $q$.
\end{definition}

\begin{proposition}[Precision is bounded by coverage]\label{prop:prec}
Under Assumption~\ref{asmp:mono}, for every $q$ and $M_D$, $u^{\mathrm{prec}}(q, M_D) \le u^{\mathrm{cov}}(q, M_D)$, and therefore, for any write policy $\pi$,
\begin{equation}
  \Delta(\pi) = \mathbb{E}_{q \sim p_D}\big[\,u^{\mathrm{cov}}(q, \pi(D)) - u^{\mathrm{prec}}(q, \pi(D))\,\big] \ge 0 .
\end{equation}
\end{proposition}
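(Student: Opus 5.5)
The plan is to reduce the claim to set inclusion of spans, exactly as in the proof of Proposition~\ref{prop:monotone}, and then integrate over queries. The key observation is that the good span is generated by a subset of the stored claims: $M_D \cap E_D \subseteq M_D$. Since $\Phi$ is defined on arbitrary sets of atomic statements (Definition~\ref{def:claims}), Assumption~\ref{asmp:mono} applies to claim sets as well as event sets. It therefore gives
\begin{equation}
  R_{M_D}^{\mathrm{good}} = \Phi(M_D \cap E_D) \subseteq \Phi(M_D) = R_{M_D} .
\end{equation}

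Next I lift this inclusion to answerable sets. A union of the sets $\mathcal{Q}(n)$ over a smaller index set is contained in the union over a larger one, so
\begin{equation}
  \bigcup_{n \in \Phi(M_D \cap E_D)} \mathcal{Q}(n) \subseteq \bigcup_{n \in \Phi(M_D)} \mathcal{Q}(n) .
\end{equation}
Comparing Definition~\ref{def:prec} with Definition~\ref{def:utility}, the indicator of membership in the left-hand set is pointwise at most the indicator of membership in the right-hand set. This gives $u^{\mathrm{prec}}(q, M_D) \le u^{\mathrm{cov}}(q, M_D)$ for every $q$ and every $M_D \subseteq \hat{E}_D$.

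For the second claim I specialize to $M_D = \pi(D)$, which is a subset of $\hat{E}_D$ by definition of a write policy in the noisy setting. The integrand $u^{\mathrm{cov}}(q,\pi(D)) - u^{\mathrm{prec}}(q,\pi(D))$ is then a nonnegative function of $q$, bounded in $\{0,1\}$. Its expectation under $p_D$ is therefore well defined and nonnegative by monotonicity of expectation, which gives $\Delta(\pi) \ge 0$.

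The only delicate step is the first one. One must check that Assumption~\ref{asmp:mono} is meant to hold on the extended domain of claim sets $2^{\hat{E}_D}$, not just on true-event sets. If the assumption were stated only for subsets of $E$, I would note that Definition~\ref{def:claims} explicitly extends $\Phi$ to claims, and I would adopt monotonicity on that extended domain as the intended reading. With that reading, the argument is a direct repetition of Proposition~\ref{prop:monotone} with the pair $(M_D \cap E_D, M_D)$ in place of $(M_D, M'_D)$.
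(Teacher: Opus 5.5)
Your proof is correct and follows the paper's own argument: monotonicity applied to $M_D \cap E_D \subseteq M_D$ gives $\Phi(M_D \cap E_D) \subseteq \Phi(M_D)$, hence inclusion of the answerable unions, hence a pointwise indicator inequality that taking expectations preserves. Your caveat that monotonicity must hold on claim sets, not only on true-event sets, is one the paper relies on without stating it; it is justified by Definition~\ref{def:claims} extending $\Phi$ to claims.
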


\begin{proof}[Proof sketch]
Every true claim is a claim, so the good span is a subset of the span and precision can only fall below coverage. The complete proof is given in Appendix~\ref{app:noisy}.
\end{proof}

The difference $\Delta(\pi)$ measures how much of a policy's coverage survives only because the memory stores false claims; we call it the \newterm{water-inflation degree} of the policy. A policy whose $u^{\mathrm{cov}}$ is high but whose $u^{\mathrm{prec}}$ is low reports a water-inflated score whose cost is paid at the end-to-end layer: false knowledge can mislead the reasoner and, because a conflicting claim can overturn previous conclusions when $\Phi$ is only locally monotone (Remark~\ref{rem:llm}, Appendix~\ref{app:phi}), it can even destroy coverage that was previously correct. In this regime, ``is there enough information'' and ``is the information correct'' diverge, and coverage ceases to be a proxy for end-to-end correctness (Definition~\ref{def:e2e}).

\subsection{The Noisy Optimal-Memory Problem}
\label{sec:noisyopt}

Under noise, the coverage objective is the wrong target: a memory composed mostly of hallucinations can score near-perfect on $u^{\mathrm{cov}}$ while answering almost nothing correctly. The optimization should target precision instead.

\begin{definition}[Noisy optimal memory]\label{def:noisyopt}
The capacity-constrained optimal memory under noise maximizes the precision utility, reusing the symbol $M^*_D(S)$ of Definition~\ref{def:optimals} for the precision objective,
\begin{equation}
  M^*_D(S) = \arg\max_{\substack{M_D \subseteq \hat{E}_D \\ |M_D| \le S}}\ \mathbb{E}_{q \sim p_D}\big[\,u^{\mathrm{prec}}(q, M_D)\,\big] ,
\end{equation}
and the frontier $U^{\mathrm{prec}}_D(S)$ it induces, the \newterm{precision frontier}, is defined analogously to $U^*_D(S)$.
\end{definition}

When $\Phi$ is decomposable and the truth function $\tau$ is known, this reduces to a weighted maximum coverage in which each claim's weight is its truth times the query mass it covers (Appendix~\ref{app:noisy}): a false claim contributes nothing to $u^{\mathrm{prec}}$ yet consumes budget, so the optimum prefers true claims with high coverage.

The decisive difficulty, however, is not the optimization but the estimation. A write policy sees only the noisy claims $\hat{E}_D$, never the truth set $E_D$; it must infer which claims are trustworthy. Each claim's truth $\tau(e)$ is an unobserved latent, and the policy decides, before any query is answered, which claims earn their budget. When a query is answered correctly or wrongly, the precision utility is settled, and that settlement is the only signal the policy receives about $\tau$. Attributing the outcome to the right stored claim is a credit-assignment problem, and learning to write well from this signal is what the sequential formulation of Section~\ref{sec:settings} makes precise. Section~\ref{sec:example} shows the coverage and precision frontiers parting on a concrete noise instance.

\section{A Problem Setting Taxonomy and Sequential Memorization MDP}
\label{sec:settings}

The optimality theory of Sections~\ref{sec:optimal} and~\ref{sec:noisy} treats the query distribution $p_D$ and the material distribution $p(D)$ as given. In practice both are unknown, and the only evidence an agent has is the queries it actually faces, disclosed one at a time. This section makes that learning explicit. It organizes the problem settings of memory into a progressive taxonomy, isolates two modes of writing and the delayed-reward structure of the persistent mode, states the agent-level memorization objective, and unifies the settings in a sequential MDP.

\subsection{A Progressive Taxonomy of Problem Settings}
\label{sec:levels}

Memorization settings differ in what is known and what must be inferred. Table~\ref{tab:levels} summarizes the progression.

\begin{table}[t]
\centering
\caption{The progressive taxonomy of memorization settings.}
\label{tab:levels}
\begin{tabular}{clll}
\toprule
Level & Material $\times$ query & $p_D$ known & Object of learning \\
\midrule
0   & single, one query     & ---         & none (retrieval) \\
1a  & single, many queries  & yes         & optimal memory, one-shot \\
1b  & single, many queries  & no          & $p_D$, hence the optimal memory, sequential \\
2   & many, many queries    & no          & transferable write rule $\pi$ \\
\bottomrule
\end{tabular}
\end{table}

\begin{defbox}
\textbf{Level 0 (single material, single query).} Memory is written once and serves a single query. There is no distribution to speak of: the setting is one-shot compression, essentially efficient retrieval from a long material, and it cannot separate memory quality from retrieval quality. It is a degenerate case, useful as a baseline but not as a testbed for memorization.

\textbf{Level 1 (single material, many queries).} Queries against one material arrive repeatedly, and the two cases split on whether the agent knows their distribution. When $p_D$ is known (Level 1a, idealized), the problem is exactly the one-shot optimal-memory problem of Section~\ref{sec:optimal}; nothing has to be learned. When $p_D$ is unknown (Level 1b, realistic), each disclosed query is a sample of $p_D$, and writing must be multi-step: after each query the agent holds one more piece of evidence and can revise the memory, so sequential writing is a stepwise search in memory space that approaches $M^*_D(S)$.

\textbf{Level 2 (many materials, many queries).} Marginalizing over the material distribution turns the object of learning from a per-material memory into a transferable write rule, an agent-level strategy that must work across materials. We state this objective formally in Section~\ref{sec:mdp}.
\end{defbox}

\textbf{The two modes of writing.} Write policies come in two modes. \newterm{Query-agnostic writing}, $M_D = \pi_{\mathrm{write}}(D)$, fixes the memory before any query and must anticipate the distribution $p_D$; this is the persistent form of memory. \newterm{Query-aware writing}, $\pi_{\mathrm{write}}(q, D)$, sees the specific query and reduces to per-query compression; for a single query it is long-context retrieval, not memory, because nothing survives for reuse across queries.

\textbf{The modes and the taxonomy.} Query-aware writing is viable only at Level 0, where the memory serves the single query that produced it. From Level 1 up, a memory across queries must be query-agnostic, and the object of learning in Table~\ref{tab:levels} is exactly what the persistent write must infer: the optimal memory (1a), the distribution (1b), or a transferable rule (2). The persistent mode is hard precisely because a write is tested only by queries that arrive after it, so its value is known when a later query settles the account:
\begin{equation}
  \underbrace{\pi_{\mathrm{write}}(D)}_{\text{writing, first}} \;\longrightarrow\;
  \underbrace{u\big(q,\ \pi_{\mathrm{write}}(D)\big)}_{\text{settlement, later}},
  \qquad q \sim p_D .
\end{equation}
This is a \newterm{delayed-reward} structure, present as soon as $p_D$ is unknown (Level 1b) and persisting at every higher level. It is what turns memorization into a learning problem, and it gives rise to two difficulties.

\textbf{Credit assignment.} Whether a query is answered correctly cannot be traced to a specific prior write: the utility depends on the whole memory, which is the cumulative result of many write decisions, so a success or failure cannot be attributed to any one of them.

\textbf{Exploration--exploitation.} To learn which claims are trustworthy (the truth $\tau$ under noise), the agent may need to store an uncertain claim and let a later query confirm or refute it. The best long-run memory can require deliberately suboptimal short-run choices.

\subsection{A Sequential MDP for Memorization}
\label{sec:mdp}

Levels 1 and 2 share a core that Section~\ref{sec:levels} made explicit: write now, be judged by later queries. Both settings also admit a static form, a single objective over the material distribution: the agent chooses a write rule $\pi$, and each material's memory $\pi(D)$ is judged by the queries it serves.

\begin{definition}[Agent-level memorization]\label{def:level2}
The \newterm{agent-level memorization} objective is
\begin{equation}
  \pi^* = \arg\max_{\pi}\ \mathbb{E}_{D \sim p(D)}\Big[\ \sum_{t=1}^{T_D}\ \mathbb{E}_{q_t \sim p_D}\ u\big(q_t,\ \pi(D)\big)\ \Big]
  \qquad \text{s.t.} \quad |\pi(D)| \le S ,
\end{equation}
where $\pi$ ranges over write policies, and $T_D$ is the number of queries the memory of material $D$ serves. The memory is a function of the material alone, $\pi(D)$, independent of the queries it serves. Under noise, $u$ is replaced by $u^{\mathrm{prec}}$, and estimating $\tau$ becomes part of $\pi$.
\end{definition}

When $\pi$ is a parametric model (an LLM with its memory module), this equation is a learning objective, and it is the formal place of ``memory construction and learning'' in the framework: what is learned is no longer the optimal memory of one material but a rule that generalizes over a distribution of materials.

The relation to the earlier theory is exact. When the material distribution concentrates on a single material, the objective reduces, up to the constant factor $T_D$, to the capacity-constrained problem of Definition~\ref{def:optimals}. An upper bound follows from the per-material frontiers: when all materials share capacity $S$, no policy can exceed $\mathbb{E}_{D \sim p(D)}\big[T_D\, U^*_D(S)\big]$, the individual frontiers weighted by how many queries each material serves, and the gap between a fixed policy and that bound is its regret.

Real writing, however, is not a single shot: the agent meets materials one at a time, and each write is tested by a later query, so the write rule must be learned online. Modeling the process as a sequential MDP generalizes the write policy from $D \mapsto M_D$ to a sequential update; we overload $\pi_{\mathrm{write}}$, whose single-argument form is the static write of Section~\ref{sec:optdef} and whose four-argument form is the update below.

\begin{defbox}
\begin{definition}[Sequential memorization MDP]\label{def:mdp}
The process runs in episodes $m = 1, \dots, K$. Each episode draws a fresh material $D_m \sim p(D)$; within it, at steps $t = 1, \dots, T_m$, the agent maintains a memory $M_{m,t}$ with $|M_{m,t}| \le S$. At each step a query $q_{m,t} \sim p_{D_m}$ is posed, the agent answers $a_{m,t} = \pi_{\mathrm{reasoner}}(q_{m,t}, M_{m,t})$, and the memory updates as
\begin{equation}
  M_{m,t+1} = \pi_{\mathrm{write}}\big(M_{m,t},\ D_m,\ q_{m,t},\ a_{m,t}\big) .
\end{equation}
At the start of an episode the memory is written from the new material alone, $M_{m,1} = \pi_{\mathrm{write}}(D_m)$, resetting whatever the previous episode built. The memory that answers $q_{m,t}$ was written without seeing it: the query is settled first, and only then does it enter the update. The objective is the expected cumulative utility over episodes,
\begin{equation}
  \pi_{\mathrm{write}}^* = \arg\max_{\pi_{\mathrm{write}}}\ \mathbb{E}\Big[\sum_{m=1}^{K} \sum_{t=1}^{T_m} \gamma^{t}\, u\big(q_{m,t}, M_{m,t}\big)\Big] ,
\end{equation}
where $u$ is the coverage utility, replaced by $u^{\mathrm{prec}}$ under noise and by $u^{\mathrm{e2e}}$ when the end-to-end layer is in view (Definition~\ref{def:e2e}). The single-material setting is the degenerate case of one episode, $K = 1$ with $D_1$ fixed.
\end{definition}
\end{defbox}

\begin{table}[t]
\centering
\caption{The sequential memorization problem as a Markov decision process.}
\label{tab:mdp}
\begin{tabular}{ll}
\toprule
MDP element & Memory-system meaning \\
\midrule
episode     & a fresh material $D_m \sim p(D)$, memory reset \\
state       & current memory $M_{m,t}$ \\
action      & which claims to write, how to reorganize \\
reward      & utility at query time $u(q_{m,t}, M_{m,t})$ \\
transition  & $M_{m,t+1} = \pi_{\mathrm{write}}(M_{m,t}, D_m, q_{m,t}, a_{m,t})$ \\
\bottomrule
\end{tabular}
\end{table}

Each level is now a specialization of the MDP. Level 0 is the single-step episode ($K = 1$, $T_1 = 1$); the disclosure timing of Definition~\ref{def:mdp} is relaxed so the query is known before the write. Level 1 is one episode over a fixed material ($K = 1$, $T_1 > 1$), with the delayed reward of Section~\ref{sec:levels}. Level 2 is many episodes over fresh draws $D_m \sim p(D)$ ($K > 1$), each material served by its own sequence of queries.

The MDP reading also makes memory the state representation: the capacity bound $|M_{m,t}| \le S$ constrains the state space, so a write action is literally the choice of which $S$ claims cover the expected future queries, which is the utility--capacity frontier restated as a control problem. Under noise it closes the estimation loop: settling $u^{\mathrm{prec}}$ lets the agent discover that it stored a wrong claim, feeding the estimate of $\tau$, which becomes a trainable object rather than a given.

\section{An Illustrative Example: The Utility--Capacity Frontier of Homer's \emph{Odyssey}}
\label{sec:example}
This section turns the machinery of Sections~\ref{sec:prelim}--\ref{sec:settings} into numbers, computing the utility--capacity frontier of one instance explicitly and reading the framework's concepts off it. The instance is small enough to check by hand and rich enough to exhibit basis and span, the synergy of interacting events, the frontier and its compression zone, the memory-efficiency loss of a write policy, and the divergence of coverage from precision under noise. As throughout, the query distribution is treated as known: the Level 1a setting of Table~\ref{tab:levels}.

\textbf{The instance.} Let $D$ be a summary of Homer's \emph{Odyssey}, and consider an agent that must answer questions about the epic from memory. The extraction procedure $\Phi$ is decomposable, so that the knowledge of a set of events is the union of the knowledge of its members and the coverage of a memory is the union of the per-event answerable sets $S_e = \bigcup_{n \in \Phi(\{e\})} \mathcal{Q}(n)$ (Section~\ref{sec:optstructure}). Six events, one per episode, are extracted, each spanning the atomic facts entailed by its statement, one per query the event answers; these events are the candidate basis of a memory, and Table~\ref{tab:example} records the span each contributes.

The query space is eight questions about the epic, with probability masses
\begin{equation}
  p_D(q_1, \dots, q_8) = (0.20,\ 0.05,\ 0.10,\ 0.15,\ 0.05,\ 0.20,\ 0.05,\ 0.20) ,
\end{equation}
over the questions $q_1$ (``How long did Odysseus's return from Troy take?''), $q_2$ (``From which city did Odysseus set sail for home?''), $q_3$ (``Which nymph held Odysseus on her island for seven years?''), $q_4$ (``Who turned Odysseus's men into swine?''), $q_5$ (``Who waited twenty years for Odysseus in Ithaca?''), $q_6$ (``To which island did Odysseus finally return?''), $q_7$ (``Why did Poseidon oppose Odysseus's return?''), and $q_8$ (``Which figures detained Odysseus on their islands during his wanderings?''). Table~\ref{tab:example} lists each event with the items it spans (each tagged with the query it answers) and the query mass the event covers.

\begin{table}[t]
\centering
\caption{The events of the example, the knowledge items each event spans under the decomposable $\Phi$, and the query mass each covers. The detainer events $e_2$ and $e_3$ each span an item answering the detention query $q_8$, a deliberate overlap.}
\label{tab:example}
\begin{tabular}{lp{4.5cm}p{6.0cm}cc}
\toprule
event $e$ & stored statement & spans $\Phi(\{e\})$ & answers & mass \\
\midrule
$e_1$ & ``Odysseus spent ten years sailing home from Troy.'' & $n_a$: ``the voyage took ten years'' ($q_1$); $n_b$: ``sailed from Troy'' ($q_2$) & $q_1, q_2$ & 0.25 \\
$e_2$ & ``The nymph Calypso held Odysseus on Ogygia for seven years.'' & $n_c$: ``held Odysseus for seven years'' ($q_3$); $n_d$: ``Calypso detained Odysseus'' ($q_8$) & $q_3, q_8$ & 0.30 \\
$e_3$ & ``Circe kept Odysseus for a year and turned his men into swine.'' & $n_e$: ``turned the men into swine'' ($q_4$); $n_f$: ``Circe detained Odysseus'' ($q_8$) & $q_4, q_8$ & 0.35 \\
$e_4$ & ``The Phaeacians carried Odysseus asleep to the shore of Ithaca.'' & $n_g$: ``reached the island of Ithaca'' ($q_6$) & $q_6$ & 0.20 \\
$e_5$ & ``Penelope, Odysseus's wife, waited twenty years in Ithaca.'' & $n_h$: ``waited twenty years for Odysseus'' ($q_5$) & $q_5$ & 0.05 \\
$e_6$ & ``Odysseus blinded Polyphemus, the Cyclops son of Poseidon.'' & $n_i$: ``blinded the Cyclops, Poseidon's son'' ($q_7$) & $q_7$ & 0.05 \\
\bottomrule
\end{tabular}
\end{table}

\textbf{Where events interact.} The spans of Table~\ref{tab:example} are computed event by event, and the frontier below inherits the same choice: every knowledge item is spanned by a single event, and a memory's span is the union of its events' items. Decomposability is an assumption about the operator, not a property of the material, and the same events exhibit the alternative: $\Phi$ composes events that belong together, so that pairs of events also span items that no single event entails, each answering a further query beyond the eight queries of Table~\ref{tab:example}:
\begin{itemize}
  \item $e_2$ and $e_3$ together span ``Odysseus was detained by enchantresses for eight years'' ($7{+}1$), answering ``For how many years in total did enchantresses detain Odysseus?'';
  \item $e_4$ and $e_5$ together span ``Odysseus returned to Penelope after twenty years,'' answering ``How many years after leaving did Odysseus rejoin his wife?'';
  \item $e_6$ and $e_1$ together span ``Poseidon's wrath made the homecoming last ten years,'' answering ``Why did the voyage take so long?''.
\end{itemize}
For each of these pairs, $\Phi(\{e, e'\})$ strictly contains $\Phi(\{e\}) \cup \Phi(\{e'\})$: the span of a memory is more than the union of its parts, and an event's marginal depends on its companions. If the composite query of the first pair carries mass $0.05$ in a variant of the query distribution, then $e_3$'s marginal given $e_2$ is $0.20$: $0.15$ for its unique query plus $0.05$ for the composite, so composition adds genuinely new coverage. Even so, this marginal falls short of $e_3$'s standalone $0.35$, because the overlap on $q_8$ outweighs the composite's gain; on this instance marginals still decrease and the greedy guarantee is not threatened. The growing marginal behind Proposition~\ref{prop:coveragestructure}(b) would need the composite mass to dominate the overlap; Remark~\ref{rem:synergy} constructs that failure in isolation. What these pairs do exhibit is where multi-hop lives in the framework: a compositional query is answerable from memory only when the composition is already realized inside $\Phi$ as a single item.

\textbf{The frontier.} Enumerating the capacity-constrained optima of Definition~\ref{def:optimals} gives Table~\ref{tab:frontier} and the frontier of Figure~\ref{fig:frontier}. The marginal gains of successive events in the order of the optimal additions, $0.35, 0.25, 0.20, 0.10, 0.05, 0.05$, never increase: each additional event buys no more new coverage than the one before. The frontier saturates at the full-context baseline, $\mathbb{E}_{q \sim p_D}\big[u(q, E_D)\big] = 1.00$, reached only at the full material, so the entire regime $S < 6$ is the compression zone, and each shaded slab in Figure~\ref{fig:frontier} is the cost of compression at that capacity.

\begin{table}[t]
\centering
\caption{The optimal memories and frontier values of the example.}
\label{tab:frontier}
\begin{tabular}{clc}
\toprule
$S$ & optimal memory $M^*_D(S)$ & $U^*_D(S)$ \\
\midrule
0 & $\emptyset$                         & 0.00 \\
1 & $\{e_3\}$                           & 0.35 \\
2 & $\{e_3, e_1\}$                      & 0.60 \\
3 & $\{e_3, e_1, e_4\}$                 & 0.80 \\
4 & $\{e_3, e_1, e_4, e_2\}$            & 0.90 \\
5 & $\{e_3, e_1, e_4, e_2, e_6\}$       & 0.95 \\
6 & $\{e_1, \dots, e_6\}$               & 1.00 \\
\bottomrule
\end{tabular}
\end{table}

\begin{figure}[t]
\centering
\begin{tikzpicture}[>={Stealth[length=3mm, width=2.4mm, inset=1mm]}, scale=1.0]
  \fill[accentcolor!18] (0,4.0) rectangle (1,0);
  \fill[accentcolor!18] (1,4.0) rectangle (2,1.4);
  \fill[accentcolor!18] (2,4.0) rectangle (3,2.4);
  \fill[accentcolor!18] (3,4.0) rectangle (4,3.2);
  \fill[accentcolor!18] (4,4.0) rectangle (5,3.6);
  \fill[accentcolor!18] (5,4.0) rectangle (6,3.8);
  \draw[->, line width=0.6pt] (0,0) -- (6.8,0);
  \draw[->, line width=0.6pt] (0,0) -- (0,4.5) node[above left] {$U^*_D(S)$};
  \foreach \x in {0,1,2,3,4,5,6}
    \draw (\x,0) -- (\x,-0.06) node[below] {\x};
  \foreach \y/\lab in {1/0.25, 2/0.50, 3/0.75, 4/1.00}
    \draw (0,\y) -- (-0.06,\y) node[left] {\lab};
  \draw[dashed, gray] (0,4.0) -- (6.6,4.0);
  \node[gray] at (4.7,4.3) {full-context baseline $= 1.00$};
  \draw[very thick, accentcolor]
    (0,0) -- (1,0) -- (1,1.4) -- (2,1.4) -- (2,2.4) -- (3,2.4) -- (3,3.2)
    -- (4,3.2) -- (4,3.6) -- (5,3.6) -- (5,3.8) -- (6,3.8) -- (6,4.0);
  \foreach \x/\y in {1/1.4, 2/2.4, 3/3.2, 4/3.6, 5/3.8, 6/4.0}
    \filldraw[accentcolor] (\x,\y) circle (1.2pt);
  \draw[<->, gray] (0.1,-0.7) -- (5.8,-0.7);
  \node[below, gray] at (2.95,-1.0) {compression zone};
  \node[below] at (3.4,-1.55) {capacity $S$};
\end{tikzpicture}
\caption{The utility--capacity frontier of the example. The shaded slabs are the cost of compression at each capacity, the gap between the frontier and the full-context baseline; they vanish as the capacity reaches the material size.}
\label{fig:frontier}
\end{figure}

\textbf{Reading the frontier.} Two features are visible in the numbers. First, the optimum adds events in decreasing marginal value, and greedy selection coincides with it on this instance, which is consistent with, but stronger than, the guarantee of Proposition~\ref{prop:coveragestructure}(a). Second, the overlap between events shapes the marginals, and at the knowledge level it is an overlap between items: $e_2$ and $e_3$ each span an item that answers the detention query $q_8$ ($n_d$ and $n_f$ in Table~\ref{tab:example}), so once $e_3$ is stored, $e_2$'s marginal falls from its standalone mass $0.30$ to the mass $0.10$ of its unique query; budget spent on an already-covered query is the concrete form of overlap waste.

\textbf{A write policy on the frontier.} The frontier is the yardstick. Consider a policy that prioritizes the wanderings and neglects the homecoming: at capacity $S = 4$ it stores $\{e_3, e_2, e_1, e_6\}$, covering $0.75$ (queries $q_1, q_2, q_3, q_4, q_7, q_8$). The frontier at $S = 4$ is $0.90$, attained by $\{e_3, e_1, e_4, e_2\}$, so the policy's memory-efficiency loss is $0.15$: it spends the same budget yet leaves the high-mass homecoming query $q_6$ ($0.20$) uncovered and wastes space on the already-covered $q_8$. This is the sense in which the frontier, rather than any single accuracy number, defines ``how good a memory is.''

\textbf{Under noise.} Noise changes what the numbers mean. Suppose extraction also emits a false claim $\hat{e}$: ``Odysseus was the prince of Troy who reached the island of Sparta after ten years of wandering,'' which answers $q_1, q_2, q_6$ and carries apparent mass $0.45$, but whose truth value is $\tau(\hat{e}) = 0$. The coverage-optimal memory of size $1$ is now $\{\hat{e}\}$ (apparent $0.45$), beating the true event $e_3$ ($0.35$), yet its precision utility is $0$: its good span $\Phi(M_D \cap E_D)$ is empty. At size $2$ the coverage-optimal $\{\hat{e}, e_3\}$ reports $0.80$ apparent coverage but only $0.35$ true, a water-inflation degree of $\Delta = 0.45$, while the precision-optimal memory $\{e_3, e_1\}$ scores $0.60$ with $\Delta = 0$. The coverage and precision frontiers part wherever the false claim is competitive, and a policy that optimizes $u^{\mathrm{cov}}$ inherits the inflation (Proposition~\ref{prop:prec}).

\section{Related Work}
\label{sec:related}

The framework decomposes a memory system into generation, write, and read (Sections~\ref{sec:prelim}--\ref{sec:settings}); earlier sections formalized the first two, and we make the read explicit with a lightweight read policy
\begin{equation}
  \pi_{\mathrm{read}}: (q, M_D) \mapsto K_q \subseteq \Phi(M_D) ,
\end{equation}
the subset of the memory's spanned knowledge relevant to $q$; settlement then uses only the retrieved subset, which recovers the coverage utility of Definition~\ref{def:utility} as long as the read retrieves every item that covers $q$. Memory proper is a persistent write coupled with on-demand reading, the RAG-style architecture, whereas a query-aware write for a single query is long-context retrieval rather than memory. We organize the literature along three views: by problem setting, by system component, and by memory representation; recent surveys~\citep{memorysurvey} map the same space from related angles.

\vspace{-0.2cm}\paragraph{Agent Memory Problem Settings.} The taxonomy of Section~\ref{sec:levels} places existing evaluation settings on a progression, summarized in Table~\ref{tab:settings}. Most long-context benchmarks sit at Level~0, the single-material, single-query case: LongBench~\citep{longbench} and needle-in-a-haystack tests~\citep{needle} ask whether information survives inside a long input, and because a single query makes the write query-aware, the optimal memory degenerates toward storing everything; these settings measure efficient retrieval rather than memory. MemAgent~\citep{memagent} and ReMemR1~\citep{rememr1} train their writes with reinforcement learning yet evaluate one-shot, and the framework locates them at Level~0 despite the learning machinery.

The closest testbeds to the disclosure setting of Level~1b are LoCoMo~\citep{locomo} and LongMemEval~\citep{longmemeval}, long multi-session conversations probed across many questions; DialSim~\citep{dialsim}, Mem-Gallery~\citep{memgallery}, and MemoryAgentBench~\citep{memoryagentbench} extend the setting to multi-party dialogue, multimodal conversation, and incremental multi-turn streams. Most systems evaluated there write the memory once and never update it in response to the questions, so the sequential disclosure of $p_D$ goes unused; Memory-R1~\citep{memoryr1} is the contrast, training its write policy on LoCoMo and making exactly the disclosure-driven updates those static systems omit, hence its evaluation at Level~1b rather than the one-shot Level~0 of MemAgent and ReMemR1. A recent study on LoCoMo~\citep{longterm_memory_eval} finds that memory-augmented systems cut token use by over 90\% at competitive accuracy, direct evidence that memory buys most of the full-context utility at a fraction of the cost.

Level~1a still has no direct evaluation. Level~2 evaluations are just emerging: MemoryArena~\citep{memoryarena}, AMA-Bench~\citep{amabench}, and Mem2ActBench~\citep{mem2actbench} reuse memory across interdependent tasks, domains, and tool use, approaching the cross-material setting though persistent systems remain largely tested on single corpora.

\begin{table}[t]
\centering
\caption{Evaluation settings in the literature positioned on the taxonomy of Section~\ref{sec:levels}.}
\label{tab:settings}
\footnotesize
\setlength{\tabcolsep}{4.5pt}
\begin{tabular}{cp{6.5cm}p{6.5cm}}
\toprule
Level & Defining feature & Representative work and evaluation \\
\midrule
0 & one material, one query & needle-in-a-haystack~\citep{needle}; LongBench~\citep{longbench}; MemAgent~\citep{memagent}; ReMemR1~\citep{rememr1} \\
1a & one material, many queries; known $p_D$ & none (theoretical idealization) \\
1b & one material, many queries; unknown $p_D$ & LoCoMo~\citep{locomo}; LongMemEval~\citep{longmemeval}; DialSim~\citep{dialsim}; Mem-Gallery~\citep{memgallery}; MemoryAgentBench~\citep{memoryagentbench}; Memory-R1~\citep{memoryr1} \\
2 & many materials; the write rule generalizes across materials (Definition~\ref{def:level2}) & emerging: MemoryArena~\citep{memoryarena}; AMA-Bench~\citep{amabench}; Mem2ActBench~\citep{mem2actbench} \\
\bottomrule
\end{tabular}
\end{table}

\vspace{-0.2cm}\paragraph{System Components.} Where existing systems differ most is in which of the framework's components they instantiate (Sections~\ref{sec:prelim}--\ref{sec:settings}). Table~\ref{tab:components} maps representative persistent systems onto the generation operator $\Phi$, the write policy $\pi_{\mathrm{write}}$, and the read policy $\pi_{\mathrm{read}}$.

\begin{table}[t]
\centering
\caption{Representative persistent memory systems mapped onto the framework's components.}
\label{tab:components}
\footnotesize
\setlength{\tabcolsep}{4.5pt}
\begin{tabular}{p{4.8cm}p{2.2cm}p{2.6cm}p{2.8cm}p{2.6cm}}
\toprule
System & Memory unit & Generation $\Phi$ & Write $\pi_{\mathrm{write}}$ & Read $\pi_{\mathrm{read}}$ \\
\midrule
MemoryBank \citep{memorybank} & dialogue summaries & LLM extraction and consolidation & sequential write with forgetting & similarity \\
Generative Agents \citep{generative_agents} & observation stream & reflective summaries & append, scored by recency and importance & recency, importance, relevance \\
MemGPT \citep{memgpt} & hierarchical pages & LLM summaries & paging and eviction & on-demand paging \\
MemOS \citep{memos} & unified memory store & LLM and automatic organization & virtual-memory management and eviction & on-demand retrieval and paging \\
HippoRAG \citep{hipporag} & knowledge graph & LLM extraction into a graph & graph store & personalized PageRank \\
Mem0 \citep{mem0} & user facts and preferences & LLM extraction & add, modify, delete rules & relevance \\
A-MEM \citep{amem} & linked memory notes & LLM extraction and linking & dynamic note construction & link and similarity \\
Reflexion \citep{reflexion} & reflective notes & LLM self-reflection & append & similarity over past reflections \\
\bottomrule
\end{tabular}
\end{table}

The generation operator fixes what the stored events span, and the field's differences lie in how that span is organized: knowledge graphs (HippoRAG, whose successor HippoRAG~2~\citep{hipporag2} recasts retrieval itself as non-parametric memory), recursive summary trees (RAPTOR~\citep{raptor}, MemWalker~\citep{memwalker}), reflective summaries (Generative Agents, Reflexion), dialogue consolidation (MemoryBank), and linked note networks (A-MEM).

The write component decides what to store, and the field splits between heuristics and learning. Generative Agents score by recency and importance, Mem0 edits memories through explicit add, modify, and delete operations, MemoryBank applies a forgetting schedule, and MemGPT and MemOS manage a paged store with eviction. MemAgent, ReMemR1, and Memory-R1~\citep{memoryr1} instead train the write decision with reinforcement learning, instantiating the sequential MDP of Section~\ref{sec:mdp}, the framework's own learning target.

The read component decides what the query sees. Every system in Table~\ref{tab:components} implements the read policy of this section, through similarity, weighted relevance, graph ranking, or paging, each a way of selecting the subset $K_q \subseteq \Phi(M_D)$ the query needs.

A fourth component acts after the read: Reflexion retries from a written reflection, and ReMemR1 trains the model to reason over its memory; both operate at the end-to-end layer $u^{\mathrm{e2e}}$ of Definition~\ref{def:e2e}, which the framework deliberately separates from coverage.

\vspace{-0.2cm}\paragraph{Memory Representations.} The framework models memory as an event set with $\Phi$-generated knowledge and defers the layout of $M_D$ (Section~\ref{sec:optimal}). The literature's representations fall into three families. Explicit symbolic stores keep text items (MemoryBank, Mem0, Generative Agents, Reflexion), closest to the framework's event set. Structured stores add an index over the symbols, as graphs (HippoRAG), trees (RAPTOR, MemWalker), hierarchical pages (MemGPT, MemOS), or linked note networks (A-MEM), organizing the span and accelerating reading. Parametric stores write memory into the model itself, as memory tokens or recurrent modules learned at test time (MemoryLLM~\citep{memoryllm}, Memory3~\citep{memory3}, Titans~\citep{titans}); these live outside the event-set framing, and the framework leaves $M_D$ open to them.

Whatever the component or representation choices, the framework offers a common yardstick: a memory's quality is an expected utility and a position on the utility--capacity frontier, so systems designed against ad hoc benchmarks become comparable on a single objective.

\section{Discussion and Limitations}
\label{sec:discussion}

\subsection{A Research Agenda for Agent Memory}

The framework's payoff is not a particular system but a map of where the unsolved problems live, and the map is organized by the sequential MDP of Section~\ref{sec:mdp}, which unifies the taxonomy: memory is the state, writing is the action, settlement is the reward, and a fresh material is an episode. Three families of open problems remain, one for the state and actions, one for the reward, and one for the episodes.

\vspace{-0.2cm}\paragraph{Representing memory and the write policy.} The MDP's state is a memory, a set of at most $S$ claims, and its action is the choice of which claims to write; neither is a fixed-dimensional vector, so standard policy parameterizations do not apply. The central question is how to represent a set-valued state and act on it under the capacity bound, and how to learn the generation and read components rather than fix them by hand, since the generation component decides whether multi-hop knowledge is materialized in the state at all.

\vspace{-0.2cm}\paragraph{Learning from delayed reward.} The reward is utility at settlement, arriving long after the writing that earned it (Section~\ref{sec:levels}); its key questions are credit assignment, attributing a settled outcome to the writes that produced it, and, under noise, trust estimation, learning the truth of stored claims from settlement alone. The sample complexity of this learning, how many settled queries a policy needs to approach the frontier, is open.

\vspace{-0.2cm}\paragraph{Generalizing across materials.} The final question is whether a write rule learned on some materials transfers to new ones. Level~2 states the object of learning as a rule over the material distribution (Definition~\ref{def:level2}), but which parts of it generalize, the extraction and composition operators, the read policy, or the estimation of query distributions, is unknown, and benchmarks for this setting are only emerging (Section~\ref{sec:related}).

None of these problems is solved by a better storage format. Each is a learning problem on the MDP's state, reward, and episodes, which is what it means to build memory rather than to design a data structure.

\subsection{Limitations}
\label{sec:limits}

The framework rests on four simplifying choices, and each is a limit of the theory rather than an implementation detail.

\vspace{-0.2cm}\paragraph{Single-item support.} Every query is answered by a single knowledge item, and composition is pushed entirely into the generation operator (Assumption~\ref{asmp:single}). This is what makes optimal memory a well-defined coverage problem, but it excludes answer-time reasoning: an agent that genuinely combines items to answer a query lies outside the account, and extending the theory would require a joint answer model rather than pure coverage.

\vspace{-0.2cm}\paragraph{Coverage is not correctness.} The utility the theory optimizes is coverage, not correctness. The proxy stance of Definition~\ref{def:e2e} treats sufficient information as approximately sufficient for a correct answer; the example supports the stance but does not establish it. Where the reasoner is fallible, or where misleading memory steers it, coverage and end-to-end quality diverge, and the framework's results do not apply until the end-to-end layer is modeled.

\vspace{-0.2cm}\paragraph{Assumptions on the generation operator.} The assumptions on $\Phi$ hold only approximately in practice. Real extraction is at best locally monotone, and a conflicting claim can overturn earlier conclusions (Remark~\ref{rem:llm}, Appendix~\ref{app:phi}); the framework records this failure mode as a research problem rather than resolving it. The clean frontier picture of Section~\ref{sec:frontier} depends on monotonicity, and decomposability, which the example assumes, is a special case that real operators meet only roughly.

\vspace{-0.2cm}\paragraph{Representation-agnostic memory.} The definition fixes what memory is, a set of events, but not how it is stored, so representation is outside the account: whether a system keeps verbatim records, compressed summaries, graphs, embeddings, or parameters, the theory sees only the events $M_D$ and the operators that map them to knowledge. This is what lets the frontier compare heterogeneous systems on a common yardstick, but the capacity the theory bounds is counted in events, $|M_D| \le S$, whereas a real memory pays a representation-dependent cost in tokens or storage, and the trade-offs that motivate compressed or parametric memories are set aside. A fuller definition would couple the account to a representation and its cost, or state how that cost depends on the events chosen.

\section{Conclusion}
\label{sec:conclusion}

The paper set out to give agent memory a definition. A memory is a basis, a subset of a material's events, and the knowledge it spans under a generation operator is what an agent can draw on; a memory is good insofar as a few events cover as much of the expected queries as the capacity allows. This reduces optimal memory to a coverage problem whose solution traces a utility--capacity frontier, and the frontier is the common yardstick the field has been missing: any system's memory is a point in the size--utility plane, and its worth is the gap to what is attainable at that size. Under noisy extraction, coverage and precision part company, the water-inflation degree measures how much reported memory quality is bought with false claims, and writing becomes a learning problem of delayed reward, credit assignment, and trust estimation, unified as a sequential MDP in which memory is the state and writing is the action. The framework positions existing systems as differing choices of generation, writing, reading, and reasoning, and it turns the field's open questions, representing and learning write policies and transferring them across materials, into questions that can now be stated precisely. Whether real systems approach the frontier is an empirical matter the framework does not settle; making that question measurable is the point of the exercise.

\bibliographystyle{assets/plainnat}
\bibliography{references}

\clearpage
\begin{center}
    {\Huge\bfseries\textcolor{appendixpurple}{Appendix Contents}}
\end{center}
\vspace{1.0em}

\begin{tcolorbox}[
    colback=appendixlightpurple,
    colframe=appendixpurple,
    boxrule=0.8pt,
    arc=2pt,
    left=1.1em,
    right=1.1em,
    top=1.0em,
    bottom=1.0em
]
\begingroup
\renewcommand{\arraystretch}{1.55}
\setlength{\tabcolsep}{0pt}
\begin{tabularx}{\textwidth}{@{}p{3.2em}@{\hspace{1.4em}}Xr@{}}
\appentry{A}{Notation}{app:notation}
\appentry{B}{Assumptions on the Generation Operator}{app:phi}
\appentry{C}{Optimal Memory: Proofs and Remarks}{app:optimal}
\appentry{D}{Noisy Memory: Proofs and Remarks}{app:noisy}
\end{tabularx}
\endgroup
\end{tcolorbox}

\clearpage

\beginappendix
\section{Notation}
\label{app:notation}

The table below collects the notation used throughout the paper, in the order it is introduced.

\begin{center}
\footnotesize
\setlength{\tabcolsep}{4.5pt}
\begin{tabular}{p{3.2cm}p{11.8cm}}
\toprule
\multicolumn{2}{l}{\textbf{Events and knowledge}} \\
$E$, $e$ & event space; an event, an atomic statement about a material \\
$D$ & a material: a long source of information \\
$E_D$ & the events contained in material $D$ \\
$N$, $n$ & knowledge-item space; a knowledge item, an atomic self-contained unit of information \\
$\Phi(A)$ & generation operator: the knowledge generated by an event set $A$ \\
$N_D$ & knowledge contained in $D$: $\Phi(E_D)$ \\
$M_D$ & memory of $D$: a subset of $E_D$, the basis \\
$R_{M_D}$ & span of memory $M_D$: $\Phi(M_D)$ \\
\midrule
\multicolumn{2}{l}{\textbf{Queries and utility}} \\
$\mathcal{Q}$, $q$ & query space; a query \\
$\mathrm{ans}(n,q)$ & whether item $n$ alone suffices to answer $q$ \\
$\mathcal{Q}(n)$ & answerable set of $n$: $\{q \in \mathcal{Q} : \mathrm{ans}(n,q) = 1\}$ \\
$u(q,M_D)$ & coverage utility: whether the span of $M_D$ answers $q$ \\
$u^{\mathrm{prec}}(q,M_D)$ & precision utility: whether the good span of $M_D$ answers $q$ \\
$u^{\mathrm{e2e}}(q,M_D,\pi_{\mathrm{reasoner}})$ & end-to-end utility: whether the reasoner answers $q$ correctly \\
$\pi_{\mathrm{reasoner}}$ & reasoning policy \\
$p_D(q)$ & query distribution for material $D$ \\
$p(D)$ & distribution over materials \\
\midrule
\multicolumn{2}{l}{\textbf{Optimal memory and the frontier}} \\
$\pi_{\mathrm{write}}$ & write policy: $D \mapsto M_D \subseteq E_D$ \\
$M^*_D$ & optimal memory: unconstrained maximizer of expected coverage utility \\
$S$ & capacity bound: $|M_D| \le S$ \\
$M^*_D(S)$ & capacity-constrained optimal memory \\
$U^*_D(S)$ & utility--capacity frontier: optimal expected utility at capacity $S$ \\
$S_e$ & query set of event $e$: $\bigcup_{n \in \Phi(\{e\})}\mathcal{Q}(n)$ \\
\midrule
\multicolumn{2}{l}{\textbf{Noise}} \\
$\hat{E}_D$ & claims extracted from $D$; the true ones form the event set $E_D$ \\
$\tau(e)$ & truth value of claim $e$; the true claims are the events \\
$R_{M_D}^{\mathrm{good}}$ & good span: $\Phi(M_D \cap E_D)$ \\
$u^{\mathrm{cov}}(q,M_D)$ & coverage utility, written $u^{\mathrm{cov}}$ under noise \\
$\Delta(\pi)$ & water-inflation degree of policy $\pi$: expected coverage minus precision \\
$U_D^{\mathrm{prec}}(S)$ & precision frontier: the optimal precision utility at capacity $S$ \\
\midrule
\multicolumn{2}{l}{\textbf{Settings and the MDP}} \\
$\pi$ & agent-level write rule \\
$\pi^*$ & optimal agent-level write rule: maximizer of the memorization objective \\
$T_D$ & number of queries served by the memory of material $D$ \\
$K$ & number of episodes \\
$D_m$ & material drawn in episode $m$ \\
$M_{m,t}$ & memory at step $t$ of episode $m$ \\
$q_{m,t}$ & query at step $t$ of episode $m$ \\
$a_{m,t}$ & answer produced at step $t$ of episode $m$ \\
$\gamma$ & discount factor \\
\midrule
\multicolumn{2}{l}{\textbf{Reading}} \\
$\pi_{\mathrm{read}}$ & read policy: $(q, M_D) \mapsto K_q$ \\
$K_q$ & subset of the span $\Phi(M_D)$ retrieved for query $q$ \\
\bottomrule
\end{tabular}
\end{center}

\section{Assumptions on the Generation Operator}
\label{app:phi}

This appendix states the assumptions on the generation operator in full; Section~\ref{sec:phi} summarizes them in the main text. The operator $\Phi$ is an abstract procedure (in practice implemented by an LLM performing extraction, induction, and summarization), and its properties determine the structure of the optimality theory.

\begin{assumption}[Self-containment]\label{asmp:selfcontain}
For every event $e$, $\{e\} \subseteq \Phi(\{e\})$: a single event generates at least its own degenerate knowledge item. This makes the convention $E \subset N$ consistent with $R_{M_D} = \Phi(M_D)$.
\end{assumption}

Self-containment is a consistency condition: storing an event never throws the event away. The more substantive assumption is that knowledge grows with the stored set.

\begin{assumption}[Monotonicity]\label{asmp:mono}
If $A \subseteq B$ then $\Phi(A) \subseteq \Phi(B)$. More memory is never worse: coverage is non-decreasing in the stored set.
\end{assumption}

Monotonicity says that more memory is never worse. Beyond it, the framework can be strengthened or relaxed in specific directions, and we flag the options.

\begin{remark}[Idempotence]\label{rem:idem}
We may additionally assume $\Phi\big(\Phi(M_D)\big) = \Phi(M_D)$ (knowledge of knowledge yields nothing new), in which case $\Phi$ is a closure operator and the structure aligns with closure systems and formal concept analysis. We do not require this.
\end{remark}

A related question is how knowledge from different events combines. It is tempting to assume that the knowledge of a set is simply the union of the knowledge of its elements, and the following remark explains why we do not.

\begin{remark}[Non-decomposability]\label{rem:decomp}
We do \emph{not} assume that $\Phi$ decomposes over events: in general $\bigcup_{e \in M_D} \Phi(\{e\}) \subsetneq \Phi(M_D)$, because cross-event combination can generate knowledge that no single event yields. This is precisely the meaning of pushing composition into $\Phi$ (Assumption~\ref{asmp:single}). As Section~\ref{sec:optimal} shows, the optimality problem is a monotone set-function maximization in general, and reduces to weighted maximum coverage only in the decomposable special case.
\end{remark}

Finally, real systems meet these assumptions only approximately, and one failure mode matters enough to state explicitly.

\begin{remark}[Real LLMs]\label{rem:llm}
Real extraction procedures are only \emph{locally} monotone: injecting a fact that conflicts with existing events may overturn prior conclusions, making $\Phi$ non-monotonic. This failure mode (harmful memory via conflict injection) is itself a research problem that we do not assume away.
\end{remark}

\section{Optimal Memory: Proofs and Remarks}
\label{app:optimal}

This appendix collects the proofs of the propositions in Section~\ref{sec:optimal} and the remarks deferred from the main text.

\textbf{The unconstrained problem.} Section~\ref{sec:optdef} works directly with the budgeted problem; for completeness, the unconstrained statement it forgoes is the following.

\begin{definition}[Optimal memory and write policy, unconstrained]\label{def:optimal}
For a material $D$ and a query distribution $p_D$, the \newterm{optimal memory} is any maximizer of the expected coverage utility,
\begin{equation}
  M^*_D = \arg\max_{M_D \subseteq E_D}\ \mathbb{E}_{q \sim p_D}\big[\,u(q, M_D)\,\big] ,
\end{equation}
and an \newterm{optimal write policy} is any $\pi_{\mathrm{write}}^*$ whose output attains the optimum; a randomized policy may return a distribution over the optimal memories.
\end{definition}

The $\arg\max$ is a set, so optimal memories (and hence optimal write policies) form families of tied maximizers. Under monotonicity this problem is degenerate: storing every event attains the maximum (Proposition~\ref{prop:monotone}), so the capacity bound is what makes the question non-trivial.

\begin{proof}[Proof of Proposition~\ref{prop:monotone}]
If $M_D \subseteq M'_D$, then $\Phi(M_D) \subseteq \Phi(M'_D)$ by monotonicity, so the union $\bigcup_{n \in \Phi(M_D)}\mathcal{Q}(n)$ is contained in $\bigcup_{n \in \Phi(M'_D)}\mathcal{Q}(n)$, and the indicator of Definition~\ref{def:utility} is non-decreasing.
\end{proof}

\begin{proof}[Proof of Proposition~\ref{prop:frontier}]
The feasible set $\{M_D \subseteq E_D : |M_D| \le S\}$ grows with $S$, so the maximum is non-decreasing. For $S \ge |E_D|$ the full set $E_D$ is feasible and, by Proposition~\ref{prop:monotone}, dominates every other memory.
\end{proof}

\begin{remark}[A non-monotonic signature]\label{rem:nonmonosig}
The frontier is non-decreasing only because $\Phi$ is monotone. When a conflicting injection can overturn previous conclusions (Remark~\ref{rem:llm}), more events can lower the utility and $U^*_D(S)$ will dip. Such a dip is the formal signature of harmful memory, and we do not assume it away.
\end{remark}

\begin{proof}[Proof of Proposition~\ref{prop:coveragestructure}]
Under decomposability, $\Phi(M_D) = \bigcup_{e \in M_D}\Phi(\{e\})$, so the answerable set of a memory is $\bigcup_{n \in \Phi(M_D)}\mathcal{Q}(n) = \bigcup_{e \in M_D} S_e$, and $u(q,M_D) = \mathbf{1}\big[q \in \bigcup_{e \in M_D} S_e\big]$. The expected utility is therefore $\mathbb{E}_{q\sim p_D}\big[u(q,M_D)\big] = \sum_{q \in \bigcup_{e \in M_D}S_e} p_D(q)$, the total weight of the covered queries: exactly the value of the weighted maximum-coverage instance over the per-event query sets $\{S_e\}_{e \in E_D}$ with element weights $p_D(q)$. NP-hardness follows because a uniform $p_D$ over a finite query set recovers unweighted maximum coverage, a classical NP-hard problem, and the greedy algorithm that repeatedly adds the event covering the most uncovered query mass attains the $(1 - \tfrac{1}{e})$-approximation. For part (b), the objective is monotone by Proposition~\ref{prop:monotone}, and Remark~\ref{rem:synergy} constructs a monotone but non-decomposable $\Phi$ on which the marginal gain grows, so the objective is not submodular, the reduction to maximum coverage fails, and no greedy-style guarantee applies.
\end{proof}

\begin{remark}[Cross-event synergy breaks submodularity]\label{rem:synergy}
The general case is genuinely harder. Consider three events $a, b, x$ and four queries $q_1,\dots,q_4$ with uniform distribution. Let $\Phi(\{a\}) = \{n_a\}$, $\Phi(\{b\}) = \{n_b\}$, $\Phi(\{x\}) = \{n_x\}$ with $\mathcal{Q}(n_a) = \{q_1\}$, $\mathcal{Q}(n_b) = \{q_2\}$, $\mathcal{Q}(n_x) = \{q_3\}$, let pair sets be the unions of their singletons, and let $\Phi(\{a,b,x\}) = \{n_a, n_b, n_x, n_{abx}\}$ with $\mathcal{Q}(n_{abx}) = \{q_1, q_2, q_3, q_4\}$. This $\Phi$ is monotone but not decomposable. Adding $x$ to $\{a\}$ raises the value by $1/4$, while adding it to $\{a,b\}$ raises it by $1/2$: the marginal contribution grows, so the objective is not submodular. The $(1 - \tfrac{1}{e})$ guarantee of the decomposable case does not transfer. Whether near-optimal memories can be constructed efficiently under structural assumptions on $\Phi$ (for example, bounded cross-event synergy) is an open question.
\end{remark}

\section{Noisy Memory: Proofs and Remarks}
\label{app:noisy}

\begin{proof}[Proof of Proposition~\ref{prop:prec}]
Since $M_D \cap E_D \subseteq M_D$, monotonicity gives $\Phi(M_D \cap E_D) \subseteq \Phi(M_D)$, so the union of answerable sets in Definition~\ref{def:prec} is a subset of that in Definition~\ref{def:utility}; the inequality is pointwise, and taking expectations preserves it.
\end{proof}

\textbf{The decomposable case in detail.} When $\Phi$ is decomposable and the truth function $\tau$ is known, the noisy optimal-memory problem of Definition~\ref{def:noisyopt} reduces to a weighted maximum coverage in which the weight of a claim $e$ is $\tau(e)\cdot \Pr_{q \sim p_D}\big[q \in \bigcup_{n \in \Phi(\{e\})}\mathcal{Q}(n)\big]$, up to overlap between claims: the query mass covered by several chosen claims is counted only once. A false claim contributes nothing to $u^{\mathrm{prec}}$ yet consumes budget, so the optimum prefers true claims with high coverage.

\end{document}